\documentclass{article}
\usepackage{spconf}
\ninept

\usepackage{cite}
\usepackage{amsmath,amssymb,amsfonts}
\usepackage{algorithmic}
\usepackage{graphicx}
\usepackage{textcomp}
\usepackage[x11names]{xcolor}

\usepackage{array}
\usepackage{mathtools}
\usepackage{tcolorbox}
\usepackage{color}
\usepackage{theorem}
\usepackage{amssymb}
\usepackage{caption}
\usepackage{subcaption}
\usepackage{cite,hyperref}
\usepackage{cases}
\usepackage{url}
\usepackage{enumitem}
\usepackage{multirow}
\usepackage{hhline}
\input{my_styles.sty}
\usepackage{symbolDef}
\usepackage{booktabs} 
\usepackage{theorem}

\newtheorem{theorem}{\bf Theorem}

\def \QED {$\blacksquare$}
\newenvironment{proof}[1][$\!\!$]{{\noindent\bf Proof #1. }}
                         {\hfill\QED\medskip}

\usepackage{tikz}
\usetikzlibrary{shapes,arrows}
\usepackage{moresize}
\usepackage{pgfplots}
\usepackage{wrapfig}
\pgfplotsset{compat=1.17}
\pgfplotstableset{col sep=comma}

\newtcolorbox{myblockt}[1]{colback=urblue!5!white,
	colframe=urblue,fonttitle=\bfseries,
	title=#1}
\newtcolorbox{myblock}{colback=urblue!5!white,
	colframe=urblue,fonttitle=\bfseries}

\def\BibTeX{{\rm B\kern-.05em{\sc i\kern-.025em b}\kern-.08em
    T\kern-.1667em\lower.7ex\hbox{E}\kern-.125emX}}

\providecommand{\range}{\operatorname{range}}

\begin{document}

\title{Topological Signal Processing with Unoriented Operators}

\name{Andrea Cavallo, Varun Sarathchandran, Geert Leus, Elvin Isufi
\thanks{
        Emails:  
        \href{mailto:a.cavallo@tudelft.nl}{a.cavallo@tudelft.nl},
        \href{mailto:v.sarathchandran@tudelft.nl}{v.sarathchandran@tudelft.nl}
        \href{mailto:g.j.t.leus@tudelft.nl}{g.j.t.leus@tudelft.nl},
        \href{mailto:e.isufi-1@tudelft.nl}{e.isufi-1@tudelft.nl}
        }
        }
\address{
Delft University of Technology, Delft, Netherlands
}

\maketitle

\begin{abstract}
Topological signal processing (TSP) processes signals on simplicial complexes with oriented boundary operators, which is the natural choice for flow signals or when the topological invariants play a role for the task at hand. However, many higher-order signals carry no orientation, and applying oriented operators to them is not well-defined since it introduces an arbitrary choice of simplex orientation. We study an unoriented TSP (UTSP) framework that replaces oriented boundaries with unoriented incidence matrices. First, we show that unoriented incidence and Laplacian matrices between arbitrary simplicial levels admit graph-like spectral properties. Second, since dropping orientation removes the Hodge decomposition, we introduce an unoriented counterpart, termed interaction-order decomposition, which quantifies how much of a higher-order signal is explained by aggregating lower-order signals. Third, we use this decomposition to derive regularizers for signal reconstruction that penalize each interaction order separately. Experiments on real-world data show that the order-aware regularizers outperform oriented baselines, with the largest gains when the signal energy is unevenly distributed across orders.
\end{abstract}
\begin{keywords}
Topological Signal Processing, Simplicial Complexes, Higher-Order Networks
\end{keywords}

\section{Introduction}
\label{sec:intro}

Signal processing on irregular domains has moved beyond graphs to model multi-way relationships. While graphs are restricted to pairwise interactions, many real-world systems, such as co-authorship networks, biochemical reactions, and social groups, exhibit higher-order relationships involving three or more entities \cite{battiston2020networks, benson2018simplicial}. Simplicial complexes model these systems by generalizing nodes and edges to triangles, tetrahedra, and higher-dimensional simplices \cite{munkres2000topology}.

To process signals defined on these domains, Topological Signal Processing (TSP)~\cite{barbarossa2020topological,schaub2021signal,isufi2025topological} builds upon algebraic topology~\cite{carlsson2009topology} and equips each simplex with an orientation. Edges point from a source to a target node, triangles are traversed clockwise or counterclockwise, and the boundary operators record these choices as positive and negative signs. Thanks to this orientation, TSP operators encode topological invariants of the complex (e.g., connected components or holes) and simplicial signals can be decomposed into a gradient, a curl and a harmonic part (the Hodge decomposition), which is beneficial for filtering, detection and learning~\cite{yang2022simplicial,isufi2025topological,liu2026matched}. Fixing an orientation is natural for flows, i.e., signals that flip sign when the orientation of their simplex is reversed, such as electrical currents and traffic or water flows \cite{schaub2020random}.

However, many higher-order signals carry no orientation at all, e.g., the number of papers written by a group of authors, the correlation between brain regions, or the strength of a molecular bond. Applying oriented operators to such signals is not well-defined. Re-orienting simplices transforms every Hodge Laplacian by a diagonal $\pm1$ similarity, so the output of any oriented filter on an unoriented signal depends on an arbitrary choice. 
This motivates \textit{unoriented} operators, in which the signs of the boundary matrices are dropped and only set inclusion is retained.
Dropping orientation removes also the Hodge decomposition, and with it the structure that TSP uses to organize signals. To still be able to carry on a similar analysis, we introduce its unoriented counterpart: a signal decomposition by interaction order, which measures how much an unoriented simplicial signal is explained by aggregation of lower-order signals.

\smallskip \noindent \textbf{Related works.}
In the graph domain, unoriented operators such as the signless Laplacian are well studied theoretically \cite{cvetkovic2007signless,cvetkovic2009towards} and employed empirically \cite{schaub2018flow,sardellitti2026learning}. On simplicial complexes, \cite{chen2026bounding,fan2026signless} derive spectral bounds for the signless Laplacian; \cite{reddy2023clustering} uses a signless node–triangle Laplacian for triangle-aware clustering. In learning, the signless down-Laplacian serves as a shift for orientation-invariant edge signals \cite{fuchsgruber2025graph}, and unsigned incidence matrices are considered in message-passing based topological neural networks \cite{bodnar2021weisfeiler,hajij2022topological}. For hypergraphs, incidence-based Laplacians \cite{zhou2006learning}, hypergraph signal processing \cite{zhang2019introducing} and their spectra \cite{chan2018spectral} are well developed. Our operators between adjacent levels coincide with them, but they differ across non-adjacent levels. Finally, decomposing a function of several variables into main effects and
interactions of increasing order is classical in statistics
\cite{efron1981jackknife}, and the same idea is used to quantify high-order epistasis in fitness landscapes \cite{poelwijk2019learning}. However, these works leave untreated the closure of simplicial complexes under inclusion.
We show that this closure induces a nested structure on the signal spaces of consecutive levels, and develop a signal processing treatment that connects it to signal filtering and reconstruction.

\smallskip \noindent \textbf{Contribution.}
We study the Unoriented Topological Signal Processing (UTSP) framework, which replaces oriented boundary operators by unoriented incidence matrices and analyzes the structure of an unoriented higher-order signal. Our contributions are three-fold.

\smallskip \noindent \textbf{(C1)} We show that unoriented incidence and Laplacian matrices between arbitrary simplicial levels have analogous spectral properties to the graphs they induce on simplices, thus making graph signal processing directly available for unoriented simplicial signals.

\smallskip \noindent \textbf{(C2)} We prove that signals on any simplicial complex admit a nested orthogonal decomposition by \emph{interaction order}, and we show that unoriented Laplacians can only process signals up to a limited interaction order.

\smallskip \noindent \textbf{(C3)} We derive regularizers for signal reconstruction, including one that shrinks each interaction order separately. On real datasets, these regularizers outperform oriented and naive baselines, with larger gains where the energy is more unevenly distributed across orders. 

\begin{figure}
    \centering
    \vspace{-.7cm}
    \includegraphics[width=\linewidth]{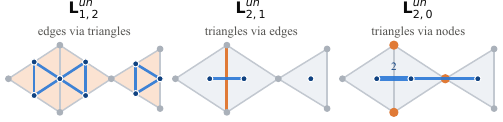}
    \caption{Graphs induced by different unoriented Laplacians $\mtL_{p,q}^{un}$ (in blue). Nodes correspond to simplices at level $p$, edges connect them if they share a simplex at level $q$ (shared simplices are orange).}
    \label{fig:laplacian_ex}
\end{figure}

\section{Background and Unoriented Operators}
\label{sec:operators}

In this section, we first recall simplicial complexes and oriented TSP.
Then, we introduce the unoriented incidence and Laplacian matrices
and discuss their spectral properties.
 
\smallskip \noindent \textbf{Simplicial complexes.}
Let $V$ be a finite vertex set. A $p$-simplex is a subset of $V$ with $p+1$ elements, and a simplicial complex $\mathcal{K}$ is a collection of simplices closed under inclusion: $\sigma\in\mathcal K$ and $\tau\subseteq\sigma$ imply $\tau\in\mathcal K$. A simplex $\tau$ is a \textit{face} of $\sigma$ if $\tau\subseteq\sigma$ and a \textit{co-face} of $\sigma$ if $\sigma\subseteq\tau$.
We write $\mathcal K_p$ for the set of $p$-simplices and $N_p=|\mathcal K_p|$ for their number; a \emph{signal at level $p$} is a vector in $\mathbb{R}^{N_p}$. Oriented representations, following algebraic topology~\cite{carlsson2009topology}, assign each simplex an orientation and define boundary matrices $\mtB_p\in\{0,\pm1\}^{N_{p-1}\times N_p}$ satisfying $\mtB_{p}\mtB_{p+1}=\mathbf 0$. Here we drop orientations.
 
\smallskip \noindent \textbf{Unoriented incidence.}
For levels $p<q$, the unoriented incidence matrix $\mtQ_{p,q}\in\{0,1\}^{N_p\times N_q}$ has $(\mtQ_{p,q})_{\sigma,\tau}=1$ iff $\sigma\subset\tau$. We use the conventions $\mtQ_{-1,q}:=\mathbf 1^\top$ (the empty face lies in every simplex) and $\mtQ_{q,q}:=\mtI$. Unlike oriented boundaries, products of incidence matrices do not vanish. Instead, they satisfy the following identity:
\begin{equation}\label{eq:incidence_id}
    \mtQ_{p,r}\mtQ_{r,q}=\binom{q-p}{r-p}\mtQ_{p,q}, \quad p<r\leq q.
\end{equation}
That is, going from level $p$ to level $q$ through an intermediate level $r$ equals going directly, up to a combinatorial constant. This result is the foundation of the interaction-order decomposition in Theorem~\ref{thm:filtration}.

 
 
\smallskip \noindent \textbf{Unoriented Laplacians.}
For levels $p\neq q$ we define the unoriented Laplacian acting on level $p$ through level $q$ as 
\begin{equation}\label{eq:Lun}
    \mtL^{un}_{p,q} = \begin{cases}
        \mtQ_{p,q}\mtQ_{p,q}^\top & p<q,\\
        \mtQ_{q,p}^\top\mtQ_{q,p} & p>q.
    \end{cases}
\end{equation}
Its diagonal counts the $q$-simplices incident to a $p$-simplex $\sigma$ and its off-diagonal $(\sigma,\tau)$ entry counts the $q$-simplices incident to both $\sigma, \tau$, i.e., it encodes connectivity at level $p$ via shared simplices at level $q$ (see Figure~\ref{fig:laplacian_ex} for examples). Therefore, $\mtL^{un}_{p,q}$ is the weighted adjacency matrix of the graph where simplices at level $p$ are nodes and edges connect two simplices if they share a simplex at level $q$, plus a diagonal term. 
Consequently, the spectral properties of any single $\mtL^{un}_{p,q}$ follow from adjacency-based graph signal processing~\cite{sandryhaila2014discrete,sandryhaila2013discrete}. First, it is symmetric positive semidefinite. Second, being entrywise non-negative, on a connected graph, its top eigenvector is single-signed and concentrates on the densest structures of that graph, i.e., cliques and hubs~\cite{cvetkovic2007signless}. Third, its zero eigenvalues correspond to signals whose values cancel when summed over every $q$-simplex (for $\mtL^{un}_{1,0}$, alternating signs along even cycles or pairs of odd cycles).
Figure~\ref{fig:edge_eig} illustrates these facts on a small complex. 
Operators between non-adjacent levels, such as the triangle-through-node Laplacian $\mtL^{un}_{2,0}$, have no oriented counterpart, since $\mtB_1\mtB_2=\mathbf 0$.
What single-graph theory does not describe is the relation \emph{between} the operators of the family $\{\mtL^{un}_{p,q}\}_{q}$, which we study next.  

\begin{figure}[t]
    \centering
    \vspace{-.7cm}
    \includegraphics[width=\linewidth]{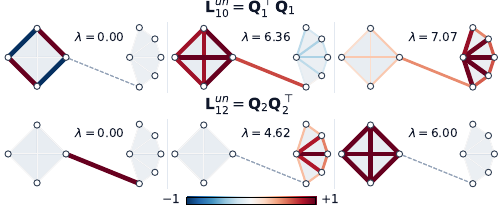}
    \caption{Top and bottom eigenvectors of the unoriented edge Laplacians $\mtL^{un}_{1,0},\mtL^{un}_{1,2}$. For both, the top eigenvectors localize on cliques and hubs. The bottom eigenvector of $\mtL^{un}_{1,0}$ is highly alternating on a 4-cycle, while that of $\mtL^{un}_{1,2}$ centers on a disconnected edge. All 3-cliques are filled by a triangle.}
    \label{fig:edge_eig}
\end{figure}

\section{Interaction-Order Decomposition}
\label{sec:order}

We now consider signals $\vcs_p \in \mathbb{R}^{N_{p}}$, which assign a value to each simplex at level $p$.
The incidence operators allow lifting signals from one level to another. For example, the operation $\mtQ_{p,q}^\top\vcs_p$ assigns to each simplex $\sigma \in \mathcal{K}_q$ the sum of the signals of its faces $\rho$ at level $p$, where $\rho \in \mathcal{K}_p, \rho \subset \sigma$. In this way, one can generate edge signals as aggregations (or \textit{lifts}) of node signals, triangle signals as lifts of node and edge signals, and so on. 
Using this approach, one can decompose a signal $\vcs_p$ into contributions of lower-level simplices, as we formalize next. 

We begin by defining the space $\mathcal V_k:=\range(\mtQ_{k,p}^\top)\subseteq\mathbb{R}^{N_p}$ for $k=-1,0,\dots,p$, which identifies the space of level-$p$ signals that are lifts of signals on $k$-simplices, i.e., sums of contributions of $(k{+}1)$-entities. 
We say that a signal in $\mathcal V_k$ has \textit{interaction order} at most $k$, i.e., it can be written by contributions of smaller simplices up to level $k$. By the conventions in Section~\ref{sec:operators}, we get $\mathcal V_{-1}=\mathrm{span}(\mathbf 1)$ and $\mathcal V_{p}=\mathbb{R}^{N_p}$.
With this in place, we state the decomposition in the following theorem.


\begin{theorem}[Interaction-order decomposition]\label{thm:filtration}
For any simplicial complex and any level $p$:
\begin{enumerate}[label=(\alph*),nosep,leftmargin=1.6em]
\item $\mathcal V_{-1}\subseteq\mathcal V_0\subseteq\cdots\subseteq\mathcal V_{p-1}\subseteq\mathcal V_{p}=\mathbb{R}^{N_p}$;
\item for every $q<p$, $\ \range(\mtL^{un}_{p,q})=\mathcal V_{q}$ and $\ \ker(\mtL^{un}_{p,q})=\mathcal V_{q}^{\perp}$.
\end{enumerate}
\end{theorem}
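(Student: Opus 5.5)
Both parts follow directly from the incidence identity \eqref{eq:incidence_id} combined with standard facts about ranges of Gram matrices, so the plan is short.

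For part (a), fix $-1\le k<k+1\le p$ and apply \eqref{eq:incidence_id} with $(p,r,q)\mapsto(k,k+1,p)$. Transposing the identity gives
\begin{equation*}
\mtQ_{k,p}^\top=\binom{p-k}{1}^{-1}\mtQ_{k+1,p}^\top\mtQ_{k,k+1}^\top .
\end{equation*}
Hence every vector $\mtQ_{k,p}^\top\vcx$ equals $\mtQ_{k+1,p}^\top\vcy$ with $\vcy=(p-k)^{-1}\mtQ_{k,k+1}^\top\vcx$, which shows $\mathcal V_k\subseteq\mathcal V_{k+1}$.

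Some edge cases need a separate check.
\begin{itemize}
\item When $k+1=p$, the identity involves $\mtQ_{p,p}=\mtI$, so the statement is trivial; this also matches the convention $\mathcal V_p=\mathbb R^{N_p}$.
\item When $k=-1$, the identity reads $\mtQ_{-1,0}\mtQ_{0,p}=(p+1)\mathbf 1^\top$, i.e.\ $\mathbf 1^\top\mtQ_{0,p}=(p+1)\mathbf 1^\top$. This holds because each $p$-simplex has exactly $p+1$ vertices. Equivalently, $\mathbf 1=\mtQ_{0,p}^\top\mathbf 1/(p+1)\in\mathcal V_0$.
\end{itemize}
The convention $\mtQ_{-1,q}=\mathbf 1^\top$ is exactly what makes \eqref{eq:incidence_id} valid for $p=-1$, since the number of $r$-faces of a $q$-simplex is $\binom{q+1}{r+1}$. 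Chaining the single-step inclusions gives the full filtration.

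For part (b), take $q<p$. By \eqref{eq:Lun}, $\mtL^{un}_{p,q}=\mtQ_{q,p}^\top\mtQ_{q,p}=\mtA\mtA^\top$ with $\mtA=\mtQ_{q,p}^\top\in\mathbb R^{N_p\times N_q}$. The standard linear-algebra facts for such a product are:
\begin{itemize}
\item $\ker(\mtA\mtA^\top)=\ker(\mtA^\top)$, because $\mtA\mtA^\top\vcx=\mathbf 0$ implies $\|\mtA^\top\vcx\|^2=0$;
\item $\ker(\mtA^\top)=\range(\mtA)^\perp$;
\item since $\mtA\mtA^\top$ is symmetric, $\range(\mtA\mtA^\top)=\ker(\mtA\mtA^\top)^\perp=\range(\mtA)$.
\end{itemize}
Because $\range(\mtA)=\range(\mtQ_{q,p}^\top)=\mathcal V_q$, this gives both claims. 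The case $q=-1$ is consistent: $\mtL^{un}_{p,-1}=\mathbf 1\mathbf 1^\top$, whose range is $\mathrm{span}(\mathbf 1)$.

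No step is a real obstacle. The only point needing care is that \eqref{eq:incidence_id} remains valid at the boundary index $p=-1$ under the stated conventions. That amounts to the counting fact above: each $p$-simplex has $p+1$ vertices, and more generally the empty face's incidence count is $\binom{q+1}{r+1}=\binom{q-(-1)}{r-(-1)}$.
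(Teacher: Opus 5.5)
Your proposal is correct and follows essentially the same route as the paper. For (a) you transpose the incidence identity at levels $(k,k+1,p)$ and handle $k=-1$ through the vertex count $p+1$; for (b) you use $\ker(\mathbf A^\top\mathbf A)=\ker\mathbf A=\range(\mathbf A^\top)^\perp$ for the Gram factorization of $\mtL^{un}_{p,q}$, with your extra boundary checks ($k+1=p$, $q=-1$, validity of the identity under the $-1$ convention) simply making explicit what the paper leaves implicit.
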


\begin{proof}
(a) \eqref{eq:incidence_id} with levels $(k,k+1,p)$ and $0\leq k\leq p-1$ gives $\mtQ_{k,k+1}\mtQ_{k+1,p}=(p-k)\,\mtQ_{k,p}$. Transposing, $\mtQ_{k,p}^\top=\tfrac{1}{p-k}\mtQ_{k+1,p}^\top\mtQ_{k,k+1}^\top$, so every vector in $\mathcal V_k$ is the lift of a vector on $(k{+}1)$-simplices, i.e.,\ $\mathcal V_k\subseteq\mathcal V_{k+1}$.
For $k=-1$ use $\mtQ_{0,p}^\top\mathbf 1=(p+1)\mathbf 1$. 
(b) With $\mathbf A=\mtQ_{q,p}$, $\mtL^{un}_{p,q}=\mathbf A^\top\mathbf A$ has $\ker(\mathbf A^\top\mathbf A)=\ker\mathbf A=\range(\mathbf A^\top)^\perp=\mathcal V_q^\perp$, hence $\range(\mtL^{un}_{p,q})=\ker(\mtL^{un}_{p,q})^\perp=\mathcal V_q$.
\end{proof}

Theorem~\ref{thm:filtration} proves that, for a fixed level $p$, the space of signals with interaction order at most $p-2$ is a subspace of that of signals with interaction order at most $p-1$, and so on.
Since the spaces are nested, $\mathbb{R}^{N_p}$ splits orthogonally into the \emph{bands} $\mathcal W_k:=\mathcal V_k\cap\mathcal V_{k-1}^\perp$, $k=-1,\dots,p$ with $\mathcal{V}_{-2}=\{0\}$. Writing $\mtP_k$ for the orthogonal projector onto $\mathcal W_k$, every level-$p$ signal decomposes uniquely as
\begin{equation}\label{eq:decomp}
    \vcx=\sum_{k=-1}^{p}\mtP_k\vcx,\qquad \pi_k(\vcx):=\frac{\|\mtP_k\vcx\|_2^2}{\|\vcx\|_2^2},
\end{equation}
where $\mtP_k\vcx$ is the \emph{pure order-$k$} component and $\pi_k$ the fraction of energy at order $k$. The projectors are computed by least squares on the incidence matrices ($\mtP_{\mathcal V_k}=\mtQ^\top(\mtQ\mtQ^\top)^{+}\mtQ$ with $\mtQ=\mtQ_{k,p}$, then $\mtP_k=\mtP_{\mathcal V_k}-\mtP_{\mathcal V_{k-1}}$).

\smallskip \noindent \textbf{Interpretation.}
Consider a triangle signal. Part of it may be explained by each node having a strength and the triangle value being their sum (order $0$); a further part by each pair contributing (order $1$); what is left is genuinely three-way (order $2$). Theorem~\ref{thm:filtration}(a) states that these explanations are nested, so the split into ``main effects, pairwise effects, genuinely higher-order effects'' is well defined on any complex. The reason is \eqref{eq:incidence_id}: a node-additive triangle signal is also pair-additive, since assigning each edge half the sum of its endpoint strengths reproduces the same result. Nesting requires closure under faces, so this structure is specific to simplicial complexes. Figure~\ref{fig:bands} shows a visualization of this signal decomposition.
Theorem~\ref{thm:filtration}(b) then identifies exactly what each unoriented Laplacian sees: $\mtL^{un}_{p,q}$ acts on the components of order at most $q$ and is identically zero on all higher orders. For example, the triangles-through-nodes Laplacian $\mtL^{un}_{2,0}$ sees the node-additive part only; the triangles-through-edges Laplacian $\mtL^{un}_{2,1}$ sees orders $\leq1$; neither touches the pure three-way band. 
For oriented operators there is no such statement since their ranges are mutually orthogonal (the Hodge decomposition) rather than nested, and they mix the bands.

\smallskip \noindent \textbf{Comparison with oriented TSP.}
Two main differences follow from the definitions. 
(i) The oriented framework organizes signals by flow structure (gradient, curl, harmonic); the unoriented one organizes them by interaction order. 
(ii) The kernels of the Hodge Laplacians identify topological invariants (e.g., number of connected components or topological holes), while the kernel of the unoriented Laplacian corresponds to different structures: $\dim\ker\mtL^{un}_{0,1}$ counts bipartite components; $\ker\mtL^{un}_{1,0}$ is spanned by alternating signals on even cycles and pairs of odd cycles joined by a path~\cite{godsil2001algebraic} (cf. Figure~\ref{fig:edge_eig}).

 \begin{figure}[t]
    \centering
    \includegraphics[width=.8\linewidth]{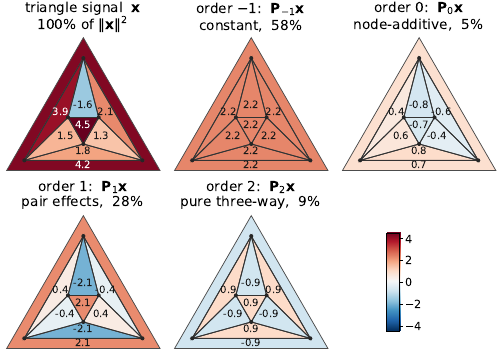}
    \caption{ Interaction-order decomposition of a triangle signal $\vcx$ (Theorem~\ref{thm:filtration}). $\vcx$ is decomposed into four orthogonal components: one constant, one node-additive $\mtP_0\vcx$, one pair-additive $\mtP_1\vcx$, and the remainder three-way effect. The triangle below the complex is shown around it. }
    \label{fig:bands}
\end{figure}

\section{Filters and Signal Reconstruction}
\label{sec:applications}

In this section, we present practical applications of this framework.
Throughout this section the signal $\vcs$ lives at level $p$ and is processed through a lower level $q<p$; by Theorem~\ref{thm:filtration}(b) the Laplacian $\mtL^{un}_{p,q}$ then acts on the components of $\vcs$ of order at most $q$ and is zero on the components of order $q+1,\dots,p$. For a triangle signal ($p=2$) related through nodes ($q=0$), the operator acts on the node-additive part, a subspace of dimension at most $N_0$ inside $\mathbb{R}^{N_2}$, and ignores the pair-explained and genuinely three-way content.
 
\smallskip \noindent \textbf{Filters.}
Let $\mtL^{un}_{p,q}=\mtV\mtLambda\mtV^\top$ with eigenvalues $\lambda_1\geq\lambda_2\cdots\geq \lambda_{N_p}\geq0$. A linear shift-invariant filter at level $p$ is $\vchs=\mtV g(\mtLambda)\mtV^\top\vcs$ for a spectral response $g$. Since $\mtL^{un}_{p,q}$ is a non-negative shift, we define frequency as the distance from the top eigenvalue $\omega=\lambda_1-\lambda$~\cite{sandryhaila2014discrete}, so that the top eigenvector (cohesive, single-signed) has frequency $0$ and highly alternating eigenvectors have high frequency. 
We next show that filters of this form are the solution to signal denoising problems.
 
\smallskip \noindent \textbf{Reconstruction.}
Let $\vcs\in\mathbb{R}^{N_{p}}$ and $\vcy=\mtM(\vcs+\vcn)$, where $\vcn$ is zero-mean white noise and $\mtM$ is a diagonal $0/1$ sampling mask ($\mtM=\mtI$ for denoising). We estimate
\begin{equation}\label{eq:recon}
    \hat{\vcs}=\arg\min_{\vcx}\ \|\mtM(\vcy-\vcx)\|_2^2+\alpha\,\vcx^\top\mtR\,\vcx+\gamma \vcx^\top \mtGamma \vcx,
\end{equation}
with $\mtR\succeq0$ a regularizer encoding the structural prior on $\vcs$, and $\mtGamma \succeq 0$ a secondary regularizer: $\mtGamma=\mtI$ recovers the standard norm-2 penalty. 
The closed form solution is $\hat{\vcs}=(\mtM+\alpha\mtR+\gamma\mtGamma)^{-1}\vcy$.
For $\mtM=\mtGamma=\mtI$ and $\mtR=h(\mtL^{un}_{p,q})$ the solution is the filter $g(\lambda)=1/(1+\gamma+\alpha h(\lambda))$. Different choices for $\mtR,\mtGamma$ follow from theory.
 
\smallskip \noindent \textit{(1) Cohesion regularizer.}
If $\vcs$ is expected to concentrate on cliques or hubs, set $\mtR=\lambda_1\mtI-\mtL^{un}_{p,q}$, i.e.\ $g(\lambda)=1/(1+\gamma+\alpha(\lambda_1-\lambda))$. This is a soft low-pass in $\omega$: the top eigenvector is shrunk by $1/(1+\gamma)$, and every other eigenvector is shrunk further in proportion to its distance $\lambda_1-\lambda$ from the top.
 
 
\smallskip \noindent \textit{(2) Interaction-order regularizer.}
The cohesion regularizer cannot tell the interaction bands apart. The components of $\vcs$ of order above $q$ lie in $\ker\mtL^{un}_{p,q}$, where the penalty reduces to $\lambda_1\mtI$: in denoising they are all shrunk by the same scalar whatever they contain, and in imputation, since $\mtL^{un}_{p,q}$ is the only coupling between simplices, no observed value propagates into them. The bands of order at most $q$ are treated through the spectrum of $\mtL^{un}_{p,q}$, which mixes them. Genuinely high-order content, e.g., the pure three-way part of a triangle signal, is therefore neither denoised nor inferred, and no choice of $q$ allows shrinking one interaction band independently of another.
To act on every band, we use the projectors of~\eqref{eq:decomp} and define the \textit{interaction-order regularizer} as
\begin{equation}\label{eq:Rord}
    \mtR_{\mathrm{ord}}=\sum_{k=-1}^{p}\beta_k\,\mtP_k,\qquad \beta_k\geq0,
\end{equation}
with band weights $\beta_k$, whose Tikhonov solution for $\mtM=\mtGamma=\mtI$ is $\hat{\vcs}=\sum_k(1+\gamma+\alpha\beta_k)^{-1}\mtP_k\vcy$: each interaction band of the level-$p$ signal, including the top one, is shrunk by its own factor. We consider two profiles for $\beta_k$: a smooth profile such as $\beta_k=(k+1)^\nu$, where a larger $\nu$ shrinks higher interaction orders more, or a hard cut $\mathrm{cut}_m$ where $\beta_k=0$ for $k<m$ and $\beta_k=1$ for $k\ge m$, which leaves the bands below order $m$ shrunk only by $\gamma$ and attenuates the rest by $1/(1+\gamma+\alpha)$.



\smallskip \noindent \textit{(3) Band-aware secondary regularizer.}
We propose the alternative secondary regularizer $\mtGamma=\mtG + \epsilon/\gamma \mtI$, where $\epsilon=10^{-10}$ ensures a solution to \eqref{eq:recon} 
and $\mtG:=\mtQ_{0,p}^\top(\mtQ_{0,p}\mtQ_{0,p}^\top)^{+2}\mtQ_{0,p}=\big((\mtQ_{0,p}^\top)^{+}\big)^{\!\top}(\mtQ_{0,p}^\top)^{+}$, so that $\vcx^\top\mtG\vcx=\|\vcc^\star\|^2$ with $\vcc^\star$ the smallest node signal satisfying $\mtQ_{0,p}^\top\vcc^\star=\mtP_{\mathcal V_0}\vcx$. That is, $\mtG$ penalizes the node-level explanation of $\vcx$ and vanishes iff $\vcx\perp\mathcal V_0$. The plain ridge instead gives, for $\vcx=\mtQ_{0,p}^\top\vcc\in\mathcal V_0$, $\|\vcx\|^2=\vcc^\top\mtL^{un}_{0,p}\vcc$, which weights each node strength by the number of $p$-simplices containing it.
This matters in imputation when the observed simplices are fewer than the nodes: the node-additive fit is then underdetermined, and $\|\vcx\|^2$ resolves the ambiguity by favouring nodes that appear in few $p$-simplices, whereas $\mtG$ acts as a ridge penalty on the node coefficients themselves and picks the most parsimonious set of node strengths consistent with the data.

\begin{figure*}[t]
\vspace{-.7cm}
\centering
\includegraphics[width=\textwidth]{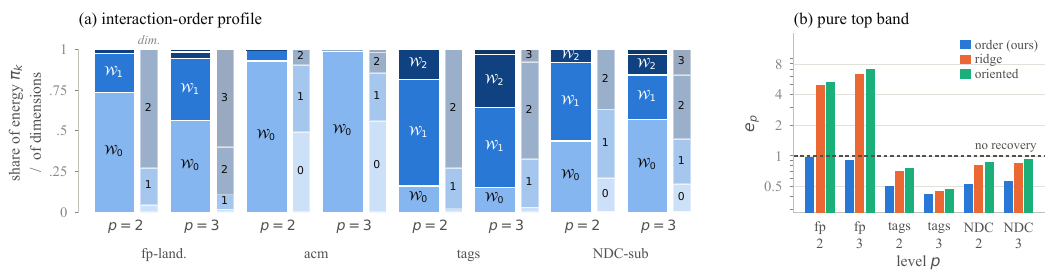}
\footnotesize\caption{(a) Energy per band ($\pi_k$, left) against its share of the
dimensions (right). (b) Error in the top band after denoising ($\lVert\mtP_p(\hat{\vcs}-\vcs)\rVert_2^2/\lVert\mtP_p\vcs\rVert_2^2$); values above $1.0$ mean the estimator injects error into it. We omit \texttt{acm} since the top band holds $\leq0.05\%$ of the energy.}
\label{fig:exp}
\end{figure*}

\begin{table*}[t]
\centering
\setlength{\tabcolsep}{4.5pt}
\renewcommand{\arraystretch}{0.95}
\scriptsize
\caption{NRMSE at $\sigma=0.5$ / $50\%$ missing, mean $\pm$ s.d.\ over $20$ trials;
best per block in \textbf{bold}. $\dagger$ marks
the two proposed regularizers; the rest are baselines. cohesion and oriented are at
their best variant per cell (over $q$, and over the Hodge Laplacians).}
\label{tab:recon}
\begin{tabular}{l c c c c c @{\hskip 12pt} c c c c}
\toprule
 & & \multicolumn{4}{c}{denoising, $\sigma=0.5$}
 & \multicolumn{4}{c}{imputation, $50\%$ missing} \\
\cmidrule(lr){3-6}\cmidrule(lr){7-10}
dataset & $p$ & ridge & oriented & cohesion$^\dagger$ & order$^\dagger$
        & n.-mean & oriented & cohesion$^\dagger$ & order$^\dagger$ \\
\midrule
\texttt{fp-landscape} & $2$ & $0.446_{\pm.017}$ & $0.446_{\pm.018}$ & $0.427_{\pm.014}$ & $\mathbf{0.298_{\pm.015}}$
                      & $0.708_{\pm.016}$ & $0.880_{\pm.044}$ & $0.728_{\pm.016}$ & $\mathbf{0.279_{\pm.045}}$ \\
\texttt{fp-landscape} & $3$ & $0.445_{\pm.009}$ & $0.445_{\pm.009}$ & $0.411_{\pm.009}$ & $\mathbf{0.273_{\pm.006}}$
                      & $0.678_{\pm.016}$ & $0.963_{\pm.028}$ & $0.699_{\pm.017}$ & $\mathbf{0.277_{\pm.016}}$ \\
\texttt{acm-coauth} & $2$ & $0.446_{\pm.010}$ & $0.437_{\pm.011}$ & $0.441_{\pm.010}$ & $\mathbf{0.390_{\pm.011}}$
                      & $0.642_{\pm.035}$ & $0.898_{\pm.050}$ & $0.822_{\pm.030}$ & $\mathbf{0.620_{\pm.033}}$ \\
\texttt{acm-coauth} & $3$ & $0.444_{\pm.016}$ & $0.446_{\pm.016}$ & $0.440_{\pm.016}$ & $\mathbf{0.365_{\pm.015}}$
                      & $0.675_{\pm.144}$ & $0.878_{\pm.135}$ & $0.785_{\pm.144}$ & $\mathbf{0.632_{\pm.152}}$ \\
\texttt{tags-math} & $2$ & $0.447_{\pm.009}$ & $0.445_{\pm.009}$ & $0.443_{\pm.009}$ & $\mathbf{0.395_{\pm.008}}$
                      & $0.899_{\pm.022}$ & $0.954_{\pm.033}$ & $0.877_{\pm.023}$ & $\mathbf{0.645_{\pm.021}}$ \\
\texttt{tags-math} & $3$ & $0.446_{\pm.009}$ & $0.445_{\pm.008}$ & $0.442_{\pm.009}$ & $\mathbf{0.431_{\pm.009}}$
                      & $0.950_{\pm.034}$ & $0.985_{\pm.045}$ & $0.898_{\pm.032}$ & $\mathbf{0.839_{\pm.040}}$ \\
\texttt{NDC-subst.} & $2$ & $0.449_{\pm.021}$ & $0.448_{\pm.022}$ & $0.446_{\pm.021}$ & $\mathbf{0.423_{\pm.021}}$
                      & $0.997_{\pm.072}$ & $0.981_{\pm.069}$ & $0.953_{\pm.054}$ & $\mathbf{0.906_{\pm.066}}$ \\
\texttt{NDC-subst.} & $3$ & $0.446_{\pm.019}$ & $0.446_{\pm.021}$ & $0.443_{\pm.019}$ & $\mathbf{0.420_{\pm.021}}$
                      & $1.012_{\pm.073}$ & $1.010_{\pm.067}$ & $0.958_{\pm.051}$ & $\mathbf{0.864_{\pm.074}}$ \\
\bottomrule
\end{tabular}
\end{table*}

\section{Experiments}
\label{sec:experiments}

\smallskip\noindent\textbf{Data and preprocessing.}
We use four complexes with signals at levels $2$ and $3$ (i.e., triangles and tetrahedra). In \texttt{fp-landscape} nodes are the $13$ mutated sites of a fluorescent protein, a $k$-simplex is the genotype carrying that set of mutations, and the signal is its measured brightness~\cite{poelwijk2019learning}; every subset was measured, so the complex is the full simplex ($N_{0\ldots3}=13,78,286,715$). Two are hypergraph benchmarks~\cite{benson2018simplicial}: in \texttt{tags-math}, nodes are tags and simplices are the sets of tags applied to questions on math.stackexchange.com, while in \texttt{NDC-substances} each simplex is a drug and the nodes are the substances making it up. For both, the complex is the downward closure of the observed hyperedges and the level-$k$ signal is the count of events whose participant set is \emph{exactly} that $k$-simplex, transformed as $\log(1+\text{count})$. 
In
\texttt{acm-coauth} the nodes are authors and a $k$-simplex is a set of $k+1$
co-authors of a paper~\cite{wang2019heterogeneous}; the signal is the $k$-way co-moment
of the TF-IDF~\cite{sparck1972statistical} of their term vectors, i.e., the vocabulary the whole group shares. 
We select the maximum hyperedge size to ensure that the $p=3$ top band remains non-empty. This results in the sizes $6,6,8$ for \texttt{tags-math}, \texttt{NDC-substances} and \texttt{acm-coauth} respectively. Then, we subsample nodes to maximise $N_2/N_0$ subject to $N_2\in[200,1200]$, which results in the values $N_2=1093,204,689$. Signals are standardised. Our code is available online\footnote{\url{https://github.com/andrea-cavallo-98/Unoriented_TSP}}.

\smallskip\noindent\textbf{Setup.}
We solve~\eqref{eq:recon} for denoising ($\mtM=\mtI$) and imputation (error on the
removed entries), scoring $\mathrm{RMSE}/\mathrm{std}(\vcs)$ over $20$ trials at noise level
$\sigma=0.5$ and $50\%$ missing. Every $\mtR$ has unit spectral
radius; $\alpha$ spans $[10^{-3},10^{3}]$ and $\gamma\in\{10^{-3},10^{-1},10\}$, chosen
by SURE~\cite{stein1981estimation} for denoising and a $25\%$ validation split for imputation. We compare two naive baselines (\textbf{ridge} for denoising, \textbf{neighborhood mean} for imputation); the \textbf{oriented} Tikhonov regularizer (every Hodge Laplacian, low- and
high-pass); \textbf{cohesion} for every $q<p$; and the \textbf{order} regularizer in~\eqref{eq:Rord} with $\beta\in\{(k+1)^4,\mathrm{cut}_1,\mathrm{cut}_2\}$. 
In imputation we also select $\mtGamma$ between $\mtI, \mtG$, while for denoising we fix $\mtGamma=\mtI$.
For each method, we report the results of the best Laplacian. 

\smallskip\noindent\textbf{Results.}
The order regularizer is best in all cells of Table~\ref{tab:recon}. The gains w.r.t. the oriented estimator 
show that the correct assumption on signal orientedness plays a relevant role in reconstruction quality.
The energy per band differs sharply across the four complexes (Fig.~\ref{fig:exp}a), and the advantage of the order regularizer over the naive baseline tracks how far the energy profile departs from the dimension profile. In particular, 
it is largest on \texttt{fp-landscape}, where 74\% of the energy occupies 4\% of the dimensions, and smallest on \texttt{tags-math} $p{=}3$ and \texttt{NDC-substances}, where the two nearly coincide.
Considering the top band energy (Fig.~\ref{fig:exp}b), the order regularizer is the only one to recover it where it has energy 
and to avoid injecting error where it is noise-dominated. 
Finally, the secondary regularizer $\mtG$ is selected in $66\%$ of \texttt{acm-coauth} imputations, where $\dim\mathcal V_0$ exceeds the observed count, against $7\%$ on \texttt{fp-landscape} ($\dim\mathcal V_0=13$ against $143$ observations). That is, $\mtG$ helps when the node-additive subspace is not identified by the observations.

\section{Conclusion}
We studied the Unoriented Topological Signal Processing (UTSP) framework, which replaces oriented boundary operators with unoriented incidence matrices and Laplacians. We characterized their spectral properties and established an orthogonal nested decomposition by interaction order, from which we built order-specific regularizers for signal denoising and imputation. Empirical evaluations on real-world datasets confirm that the unoriented framework largely outperforms oriented competitors and interaction order-specific regularization helps for signals distributed unevenly across orders. 

\section{Acknowledgments}

This work was supported in part by the TU Delft AI Labs programme, NWO OTP GraSPA proposal \#19497, NWO VENI proposal 222.032, and by the SURE-AI Centre grant \#357482, Research Council of Norway. 
Claude Opus 5 (Anthropic) was used to assist in writing the manuscript and coding the simulations. The authors take full responsibility for the results of this paper.

\section{Compliance with Ethical Standards}

This is a numerical simulation study for which no ethical approval was required.

\bibliographystyle{IEEEbib}
\bibliography{citations}

@article{chen2026bounding,
  title={Bounding the largest eigenvalue of signless Laplace operator on simplicial complexes},
  author={Chen, Xiaodan and Kou, Yongfang},
  journal={Graphs and Combinatorics},
  volume={42},
  number={4},
  pages={57},
  year={2026},
  publisher={Springer}
}

@article{fan2026signless,
  title={Signless Laplacian Spectral Radius and Link Homology of Simplicial Complexes},
  author={Fan, Yi-Zheng and Zhang, Huan-Zhi},
  journal={arXiv preprint arXiv:2606.22825},
  year={2026}
}

@article{sardellitti2026learning,
  title={Learning Laplacian Forms for Graph Signal Processing via the Deformed Laplacian},
  author={Sardellitti, Stefania},
  journal={arXiv preprint arXiv:2604.00728},
  year={2026}
}

@article{cvetkovic2007signless,
  title={Signless Laplacians of finite graphs},
  author={Cvetkovi{\'c}, Drago{\v{s}} and Rowlinson, Peter and Simi{\'c}, Slobodan K},
  journal={Linear Algebra and its applications},
  volume={423},
  number={1},
  pages={155--171},
  year={2007},
  publisher={Elsevier}
}

@inproceedings{reddy2023clustering,
  title={Clustering with simplicial complexes},
  author={Reddy, Thummaluru Siddartha and Chepuri, Sundeep Prabhakar and Borgnat, Pierre},
  booktitle={2023 31st European Signal Processing Conference (EUSIPCO)},
  pages={1609--1613},
  year={2023},
  organization={IEEE}
}

@inproceedings{fuchsgruber2025graph,
  title={Graph neural networks for edge signals: Orientation equivariance and invariance},
  author={Fuchsgruber, Dominik and Postuvan, Tim and G{\"u}nnemann, Stephan and Geisler, Simon Markus},
  booktitle={International Conference on Learning Representations},
  volume={2025},
  pages={40915--40948},
  year={2025}
}

@article{battiston2020networks,
  title={Networks beyond pairwise interactions: Structure and dynamics},
  author={Battiston, Federico and others},
  journal={Physics reports},
  volume={874},
  pages={1--92},
  year={2020},
  publisher={Elsevier}
}

@article{benson2018simplicial,
  title={Simplicial closure and higher-order link prediction},
  author={Benson, Austin R and others},
  journal={Proceedings of the National Academy of Sciences},
  volume={115},
  number={48},
  pages={E11221--E11230},
  year={2018},
  publisher={National Academy of Sciences}
}

@article{barbarossa2020topological,
  title={Topological signal processing over simplicial complexes},
  author={Barbarossa, Sergio and Sardellitti, Stefania},
  journal={IEEE Transactions on Signal Processing},
  volume={68},
  pages={2992--3007},
  year={2020},
  publisher={IEEE}
}

@article{schaub2020random,
  title={Random walks on simplicial complexes and the normalized Hodge 1-Laplacian},
  author={Schaub, Michael T and others},
  journal={SIAM Review},
  volume={62},
  number={2},
  pages={353--391},
  year={2020},
  publisher={SIAM}
}

@article{isufi2025topological,
  title={Topological signal processing and learning: Recent advances and future challenges},
  author={Isufi, Elvin and others},
  journal={Signal Processing},
  volume={233},
  pages={109930},
  year={2025},
  publisher={Elsevier}
}

@article{munkres2000topology,
  title={Topology. featured titles for topology series},
  author={Munkres, James R},
  journal={Prentice Hall, Incorporated},
  volume={812},
  number={813},
  pages={24},
  year={2000}
}

@article{carlsson2009topology,
  title={Topology and data},
  author={Carlsson, Gunnar},
  journal={Bulletin of the American mathematical society},
  volume={46},
  number={2},
  pages={255--308},
  year={2009}
}

@inproceedings{bodnar2021weisfeiler,
  title={Weisfeiler and lehman go topological: Message passing simplicial networks},
  author={Bodnar, Cristian and others},
  booktitle={International conference on machine learning},
  pages={1026--1037},
  year={2021},
  organization={PMLR}
}

@article{hajij2022topological,
  title={Topological deep learning: Going beyond graph data},
  author={Hajij, Mustafa and others},
  journal={arXiv preprint arXiv:2206.00606},
  year={2022}
}

@article{zhang2019introducing,
  title={Introducing hypergraph signal processing: Theoretical foundation and practical applications},
  author={Zhang, Songyang and Ding, Zhi and Cui, Shuguang},
  journal={IEEE Internet of Things Journal},
  volume={7},
  number={1},
  pages={639--660},
  year={2019},
  publisher={IEEE}
}

@article{chan2018spectral,
  title={Spectral properties of hypergraph laplacian and approximation algorithms},
  author={Chan, T-H Hubert and Louis, Anand and Tang, Zhihao Gavin and Zhang, Chenzi},
  journal={Journal of the ACM (JACM)},
  volume={65},
  number={3},
  pages={1--48},
  year={2018},
  publisher={ACM New York, NY, USA}
}

@inproceedings{schaub2018flow,
  title={Flow smoothing and denoising: Graph signal processing in the edge-space},
  author={Schaub, Michael T and Segarra, Santiago},
  booktitle={2018 IEEE Global Conference on Signal and Information Processing (GlobalSIP)},
  pages={735--739},
  year={2018},
  organization={IEEE}
}

@inproceedings{wang2019heterogeneous,
  title={Heterogeneous graph attention network},
  author={Wang, Xiao and others},
  booktitle={The world wide web conference},
  pages={2022--2032},
  year={2019}
}

@article{sandryhaila2013discrete,
  title={Discrete signal processing on graphs},
  author={Sandryhaila, Aliaksei and Moura, Jos{\'e} MF},
  journal={IEEE transactions on signal processing},
  volume={61},
  number={7},
  pages={1644--1656},
  year={2013},
  publisher={IEEE}
}

@article{schaub2021signal,
  title={Signal processing on higher-order networks: Livin’on the edge... and beyond},
  author={Schaub, Michael T and others},
  journal={Signal Processing},
  volume={187},
  pages={108149},
  year={2021},
  publisher={Elsevier}
}

@article{yang2022simplicial,
  title={Simplicial convolutional filters},
  author={Yang, Maosheng and Isufi, Elvin and Schaub, Michael T and Leus, Geert},
  journal={IEEE Transactions on Signal Processing},
  volume={70},
  pages={4633--4648},
  year={2022},
  publisher={IEEE}
}

@article{cvetkovic2009towards,
  title={Towards a spectral theory of graphs based on the signless Laplacian, I},
  author={Cvetkovi{\'c}, Drago{\v{s}} and Simi{\'c}, Slobodan K},
  journal={Publications de l'Institut Mathematique},
  volume={85},
  number={105},
  pages={19--33},
  year={2009},
  publisher={Matemati{\v{c}}ki institut SANU}
}

@article{zhou2006learning,
  title={Learning with hypergraphs: Clustering, classification, and embedding},
  author={Zhou, Dengyong and Huang, Jiayuan and Sch{\"o}lkopf, Bernhard},
  journal={Advances in neural information processing systems},
  volume={19},
  year={2006}
}

@article{efron1981jackknife,
  title={The jackknife estimate of variance},
  author={Efron, Bradley and Stein, Charles},
  journal={The Annals of Statistics},
  pages={586--596},
  year={1981},
  publisher={JSTOR}
}

@article{poelwijk2019learning,
  title={Learning the pattern of epistasis linking genotype and phenotype in a protein},
  author={Poelwijk, Frank J and Socolich, Michael and Ranganathan, Rama},
  journal={Nature communications},
  volume={10},
  number={1},
  pages={4213},
  year={2019},
  publisher={Nature Publishing Group UK London}
}

@article{stein1981estimation,
  title={Estimation of the mean of a multivariate normal distribution},
  author={Stein, Charles M},
  journal={The annals of Statistics},
  pages={1135--1151},
  year={1981},
  publisher={JSTOR}
}

@article{sparck1972statistical,
  title={A statistical interpretation of term specificity and its application in retrieval},
  author={Sparck Jones, Karen},
  journal={Journal of documentation},
  volume={28},
  number={1},
  pages={11--21},
  year={1972},
  publisher={MCB UP Ltd}
}

@book{godsil2001algebraic,
  title={Algebraic graph theory},
  author={Godsil, Christopher David and Royle, Gordon and Godsil, CD},
  volume={207},
  year={2001},
  publisher={Springer New York}
}

@article{liu2026matched,
  title={Matched topological subspace detector},
  author={Liu, Chengen and Tenorio, Victor M and Marques, Antonio G and Isufi, Elvin},
  journal={IEEE Transactions on Signal Processing},
  year={2026},
  publisher={IEEE}
}

@article{sandryhaila2014discrete,
  title={Discrete signal processing on graphs: Frequency analysis},
  author={Sandryhaila, Aliaksei and Moura, Jose MF},
  journal={IEEE Transactions on signal processing},
  volume={62},
  number={12},
  pages={3042--3054},
  year={2014},
  publisher={IEEE}
}

\end{document}